\documentclass[]{ailabs}

\usepackage{enumitem}

\usepackage{multirow}
\usepackage{graphicx}

\usepackage{amsthm}
\usepackage{wrapfig}

\newtheorem{theorem}{Theorem}
\newtheorem{definition}{Definition}

\title{Be Wary of Your Time Series Preprocessing}

\author[1]{Sofiane Ennadir}
\author[2,*]{Tianze Wang}
\author[1]{Oleg Smirnov}
\author[1]{Sahar Asadi}
\author[1]{Lele Cao}

\affiliation[1]{King AI Labs, Microsoft Gaming}
\affiliation[2]{Kreditz AB}

\contribution[*]{Work done while at AI Labs}

\correspondence{\email{sofiane.ennadir@king.com}}
\date{February 2025}

\abstract{%
Normalization and scaling are fundamental preprocessing steps in time series modeling, yet their role in Transformer-based models remains underexplored from a theoretical perspective. In this work, we present the first formal analysis of how different normalization strategies, specifically instance-based and global scaling, impact the expressivity of Transformer-based architectures for time series representation learning. We propose a novel expressivity framework tailored to time series, which quantifies a model’s ability to distinguish between similar and dissimilar inputs in the representation space. Using this framework, we derive theoretical bounds for two widely used normalization methods: Standard and Min-Max scaling. Our analysis reveals that the choice of normalization strategy can significantly influence the model's representational capacity, depending on the task and data characteristics. We complement our theory with empirical validation on classification and forecasting benchmarks using multiple Transformer-based models. Our results show that no single normalization method consistently outperforms others, and in some cases, omitting normalization entirely leads to superior performance. These findings highlight the critical role of preprocessing in time series learning and motivate the need for more principled normalization strategies tailored to specific tasks and datasets.}

\begin{document}

\maketitle

\section{Introduction}
Deep learning models have achieved significant success across various domains, notably in natural language processing and computer vision, where Transformer-based architectures~\citep{vaswani2017attention} have become the \textit{de facto} standard for many tasks. Motivated by these advances, the time series domain has also seen increasing adoption of Transformer models~\citep{wang2025deeptimeseriesmodels}, yielding strong performance on downstream tasks such as forecasting, classification, and anomaly detection. Similar to developments in other domains, recent work has introduced foundation models for time series~\citep{Goswami2024moment, Das2025timesfm}, capable of generalizing across tasks with minimal adaptation. These models are typically pre-trained on auxiliary objectives, including next-token prediction, masked reconstruction~\citep{zhang2022survey}, or recently joint-embedding learning~\citep{ennadir2024joint}, and can be adapted via head-only fine-tuning, full fine-tuning, or parameter-efficient methods (PEFT)~\citep{han2024parameter} such as Low Rank Adaptation (LoRA)~\citep{hu2022lora}.

Despite the growing body of research and work on architecture adaptations \cite{zhou2021informer, wu2021autoformer, nie2023patchtst}, pre-training strategies \cite{zhang2022self}, and tokenization schemes for time series, one critical component remains underexplored: the role of input scaling and transformation prior to model ingestion. Typically, in most Transformer-based approaches for time series, the input is scaled before computing attention. The main idea is that this scaling helps in the stability of the training by controlling the gradient for instance. However, and to our knowledge, the impact of this scaling step on model behavior and performance in the downstream tasks has not been rigorously analyzed. Prior work has largely treated input normalization as a preprocessing detail, rather than a design choice with theoretical and empirical implications.

In this perspective, this early work aims to reduce this gap by theoretically analyzing the effect of input transformation part. Specifically, we aim to understand how the chosen scaling distribution and strategy affects the overall performance of the model. In this direction, we start by formalizing the concept of expressivity of a time series Transformer-based model and consequently use this formalization to theoretically understand how some scaling methods that are widely used in practice affect the overall model. We specifically derive some insights on how to choose the right strategy depending on the dataset and also on the considered downstream task. Then we empirically analyze this effect and showcase that actually the choice of scaling method can result in very diverse results. Therefore, choosing the right strategy is critical. While we do not specifically propose a new scaling or preprocessing method for time series input data, our work lays down an important theoretical analysis to understand this effect better, and it opens a new research direction that could tackle this problem. We summarize our contributions as follows:
\begin{itemize}
    \item We present a theoretical framework for analyzing how input scaling affects the expressivity of Transformer-based models for time series.
    \item We provide empirical evidence validating our theoretical insights across multiple datasets, tasks, and model architectures.
\end{itemize}

\section{Related Work}\label{sec:related_work}
The widespread use of time series in real-world applications has driven the development and adaptation of deep learning models to effectively learn temporal representations. Early approaches were primarily based on recurrent neural networks (RNNs) in its different versions \citep{}, which naturally model temporal dependencies via their sequential structure. Building on this, LSTNet~\citep{lai2018modeling} introduced a hybrid architecture that integrates convolutional neural networks (CNNs) with recurrent skip connections, allowing the model to capture both short-term and long-term temporal patterns.

\noindent With the success of Transformer-based models in other domains, their adaptation to time series has gained significant attention. Informer~\citep{zhou2021informer} proposes a ProbSparse self-attention mechanism coupled with distillation techniques to improve efficiency by focusing on the most informative keys. Autoformer~\citep{wu2021autoformer} incorporates ideas from traditional time series analysis, such as series decomposition and autocorrelation, into the Transformer architecture. More recently, PatchTST~\cite{nie2023patchtst} addresses limitations in prior designs by segmenting time series into patches and processing each channel independently. This approach preserves semantic structure while improving both model efficiency and predictive performance.

\noindent In addition to supervised models, self-supervised and foundation models have become increasingly prominent in time series representation learning. TimesFM~\cite{Das2025timesfm} presents a general-purpose forecasting model trained on a large corpus of public and proprietary datasets. Leveraging a Transformer-based encoder-decoder architecture, it demonstrates strong zero-shot performance across diverse applications, including forecasting, classification, and anomaly detection — particularly in low-resource settings. On a broader scope, MOMENT~\cite{Goswami2024moment} introduces a universal foundation model that unifies forecasting with representation learning across multiple data modalities. Through large-scale pretraining and adaptive fine-tuning, MOMENT supports a wide range of tasks, including classification, imputation, and anomaly detection. Its architecture is explicitly designed to accommodate multimodal inputs, such as categorical attributes and textual context, enabling flexible and context-aware modeling.

\section{Preliminaries}\label{sec:prelems}

We begin by introducing key definitions and notations that form the foundation of our work and specifically our theoretical analysis.

\noindent \textbf{Time Series Modeling.} We consider the general task of representation learning for time series data, with a focus on the multivariate setting. Formally, let $\mathcal{X} = \{x_1, x_2, \ldots, x_N\}$ denote a collection of $N$ time series instances, where each $x_j$ is of length $\ell$ and consists of $k$ channels. The goal is to learn a function $f: \mathcal{X} \rightarrow \mathcal{R}$ that maps each input time series to a representation $h \in \mathcal{R} \subset \mathbb{R}^m$, where $m$ is the chosen dimension of the learned hidden space.
This learned representation can serve different purposes depending on the downstream task. For instance in supervised settings, where a corresponding label set $\mathcal{Y} = \{y_1, \ldots, y_N\}$ is available, the representations may be passed to a prediction head (e.g., a linear classifier) for tasks such as classification, regression, or forecasting. In self-supervised scenarios, the function $f$ is instead trained using auxiliary objectives, with the goal of capturing generalizable patterns that transfer to various downstream tasks.

\noindent \textbf{Transformer-Based Models.} While various architectures have been proposed for time series modeling (as denoted in Section \ref{sec:related_work}), this work focuses on the Transformer architecture~\citep{vaswani2017attention}, which serves as the backbone of most recent foundation models. Resulting from its flexibility and strong empirical performance, the Transformer architecture has become a widely adopted model across a range of time series tasks, including forecasting, classification, and anomaly detection. Numerous adaptations have further extended its applicability to temporal data, making it a reference point for state-of-the-art performance.

Given a multivariate time series input, the data is typically segmented into tokens before being processed by the Transformer. Let $X \in \mathcal{X} \subseteq \mathbb{R}^{n \times d}$ denote the resulting sequence of $n$ tokens, where each token $x_i \in \mathbb{R}^d$. The core of the Transformer architecture is the \textit{self-attention} mechanism~\citep{vaswani2017attention}, which computes contextualized representations by attending over all input tokens. Specifically, for learnable \textit{query}, \textit{key}, and \textit{value} projection matrices $W^Q, W^K, W^V \in \mathbb{R}^{d \times (d/H)}$, the output of a single attention head $\mathrm{AH}$ is computed as:
\begin{equation*}\label{eq:scaled_dot_product_attention}
\mathrm{AH}(X) = \operatorname{softmax}\!\left(\frac{(XW^Q)(XW^K)^\top}{\sqrt{d/H}}\right)(XW^V),
\end{equation*}
\noindent where $H$ denotes the number of attention heads, and $d/H$ is the dimensionality per head. In practice, $H$ attention heads are computed in parallel and then concatenated and projected via a learnable output matrix $W^O \in \mathbb{R}^{d \times d}$ to obtain the multi-head attention (MHA) output:
\begin{equation*}\label{eq:multi_head_attention}
\mathrm{MH}(X) = \mathrm{concat}\bigl(\mathrm{AH}_1(X), \mathrm{AH}_2(X), \dots, \mathrm{AH}_H(X)\bigr) W^O.
\end{equation*}
Each Transformer block also includes a residual connection, layer normalization~\citep{lei2016layer}, and a position-wise feed-forward network (FFN). The FFN is composed of fully connected layers applied independently to each position, where each layer performs an affine transformation followed by a non linear activation function:
$$h^{(\ell)} = \sigma(W^{(\ell)} h^{(\ell-1)} + b^{(\ell)}), \text{ with } h^{(0)} = X.$$

\noindent \textbf{Time Series Preprocessing.} Given a multivariate time series input that has been tokenized into $X \in \mathcal{X} \subseteq \mathbb{R}^{n \times d}$, it is standard practice to apply preprocessing to each channel prior to model ingestion. Such preprocessing steps are crucial for improving training stability and overall model performance. We consider a general formulation in which each input channel is transformed by a distinct preprocessing function. Specifically, let $h_1, \ldots, h_d$ denote a set of functions, where each $h_i$ operates on the $i$-th channel:
\begin{align*}
    h_i: \quad & \mathcal{X}_i \subseteq \mathbb{R}^{\ell} \rightarrow \mathcal{H} \subseteq \mathbb{R}^{\ell} \\
    & X_i \mapsto h_i(X_i)
\end{align*}

\noindent where $X_i$ denotes the time series corresponding to channel $i$ with length $\ell$. This formulation allows for channel-specific preprocessing strategies, though in practice, another common approach used in practice is to apply the same transformation or use the same parameters across all channels.

\noindent A variety of preprocessing techniques are being used in practice. In our work, we focus first one the \textit{Standard scaling}, which consists of normalizing through the mean and standard deviation and the \textit{Min-Max scaling}, which rescales each channel to the $[0, 1]$ interval based on its minimum and maximum values. Such transformations standardize the input scale, aiding the optimization process during training.

\section{Expressivity of a Time Series Transformer}

In this section, we provide a theoretical analysis of the impact of commonly used input scaling methods in Transformer-based models (TBMs) for time series. We start by introducing a notion of expressivity tailored to the time series setting. Then, we introduce our considered problem setup. Finally, we use the provided framework to investigate how two widely adopted pre-scaling strategies influence the resulting model's expressivity.

\subsection{Expressivity of Transformer-Based Models}

As outlined in Section \ref{sec:prelems}, time series representation learning aims to map a time series to a fixed-dimensional vector $h \in \mathbb{R}^m$ that captures the salient characteristics needed for downstream tasks. Ideally, this representation should preserve meaningful similarities and differences between time series: similar inputs should yield similar representations, while dissimilar inputs should result in distinguishable outputs. This property is particularly important in settings such as classification, where a model must be able to easily differentiate and assign distinct labels to similar and dissimilar inputs from different classes.

To formally capture this behavior, we build on the framework proposed in prior work~\cite{ennadir2025poolwiselyeffectpooling}, and define a local neighborhood around a time series $X$ using a “similarity budget” $\epsilon$. Specifically, we define the $\epsilon$-similar neighborhood as the set:
$$ \mathcal{N}_{\epsilon}(X) = \left\{ \tilde{X} \in \mathcal{X} : \lVert X - \tilde{X} \rVert \leq \epsilon \right\},$$
where $\lVert \cdot \rVert$ denotes a norm-based distance defined on the input space $\mathcal{X}$. This neighborhood captures time series that are considered semantically similar to $X$. We then define the expressivity of a model $f$ as its ability to differentiate between such similar inputs in the learned representation space. Intuitively, if a model maps nearby time series to significantly different outputs, it has high expressivity in that region of the input space. Formally, we define the expressivity of a model $f$ under a distance threshold $\sigma$ as:
\begin{align*}\label{eq:experessivity}
\mathcal{E}_{\epsilon}[f] = \mathop{\mathbb{P}}_{X \sim \mathcal{D}_{\mathcal{X}}} \Bigl[ \tilde{X} \in \mathcal{N}_{\epsilon}(X) : d_{\mathcal{Y}}(f(\tilde{X}), f(X)) > \sigma \Bigr],
\end{align*}
where $\mathcal{D}_{\mathcal{X}}$ is the data distribution over the input space $\mathcal{X}$, and $d_{\mathcal{Y}}$ is a distance function in the output space $\mathcal{Y}$. In this work, we consider the $\ell_2$ norm for $d_{\mathcal{Y}}$, although other norms shall yield same insights given the equivalence of norm. Hence, choosing $\ell_1$ may be preferable in scenarios where outliers and sparse deviations are important to track. 

The quantity $\mathcal{E}_{\epsilon}[f]$ measures the probability that the model significantly separates two similar inputs by more than a threshold $\sigma$. A low value of $\mathcal{E}_{\epsilon}[f]$ implies the model is smooth in that neighborhood, preserving similarity. Conversely, a higher value suggests sharper distinctions in the learned representations. In  this perspective, we can introduce a formal definition of expressivity. 

\begin{definition}\label{def:expressivity}
Let $f \colon \mathcal{X} \subseteq \mathbb{R}^{n \times d} \rightarrow \mathcal{Y} \subseteq \mathbb{R}^d$ be a time series representation model. We say that $f$ is $(\epsilon, \sigma, \gamma)$\textit{-expressive} if $\mathcal{E}_{\epsilon}[f] \leq \gamma$.
\end{definition}

Note that this definition is general and can be applied to any representation model for time series. In this work, however, and as previously specified, we focus on the subclass of Transformer-based models, which currently dominate the literature due to their versatility and strong empirical results. Our goal is to investigate how different preprocessing strategies affect the expressivity of such models.

\paragraph{Problem Setup.} For our theoretical analysis, we consider a simplified setting where $f$ is a single-layer Transformer model based on self-attention with $H$ attention heads. To maintain generality while enabling tractable analysis, we assume that the non-linear activation functions used in the architecture are $1$-Lipschitz continuous~\citep{virmaux18}, a condition satisfied by many used and standard function in the literature.

\subsection{On the Effect of Normalization}
Normalizing the input time series has become a standard preprocessing step in most Transformer-based time series models. This operation is widely believed to improve training stability by mitigating issues such as vanishing or exploding gradients and by promoting consistent feature distributions, particularly in non-stationary settings. However, to the best of our knowledge, a formal theoretical analysis of how the choice of normalization affects model behavior and expressivity remains absent from the literature. As a result, the specific normalization strategy often varies across implementations and is typically chosen at the discretion of the user. 

In this work, we aim to study the impact of such normalization procedures through the lens of expressivity, as defined in Definition~\ref{def:expressivity}. While many preprocessing strategies exist, we focus on two widely used transformations:
\begin{itemize}
    \item \textbf{Standard Normalization}: Rescales each feature using a mean and standard deviation, which are computed either for each channel (instance-based) or globally.
    \item \textbf{Min-Max Normalization}: Rescales each feature linearly to the $[0, 1]$ interval using the minimum and maximum values, which are computed either for each channel (instance-based) or all the channels (global).
\end{itemize}

\noindent Global normalization ensures comparability across channels by preserving their scale ratios. This is beneficial in settings where the absolute scale of a channel conveys meaningful information. Conversely, instance-based normalization is often preferable when intra-channel variation is more important. We now present a theoretical analysis showing how each normalization strategy influences model expressivity.

\begin{theorem}\label{theo:standard_normalization}
Let $f \colon \mathcal{X} \subseteq \mathbb{R}^{n \times d} \rightarrow \mathcal{Y} \subseteq \mathbb{R}^d$ be a Transformer-based model (TBM) under the setting described in our problem formulation. Then:
\begin{itemize}
    \item If $f$ uses \textbf{Instance-based Standard Normalization}, then $f$ is $(\epsilon,\sigma,\gamma)$-expressive with:
    \[
    \gamma = \left(\sqrt{\sum_{i=1}^{N} \frac{1}{v_i^2}} \right)\frac{\epsilon}{\sigma} \left(\frac{d}{d-1}\right)^2 (1+C_1) C_2,
    \]
    \item If $f$ uses \textbf{Global Standard Normalization}, then $f$ is $(\epsilon,\sigma,\gamma)$-expressive with:
    \[
    \gamma = \frac{\epsilon\sqrt{d} }{ \sigma v}  \left(\frac{d}{d-1}\right)^2 (1+C_1) C_2,
    \]
\end{itemize}
\begin{align*} 
    \text{with:} \hspace{0.5em} &  C_1 = \lVert W_O \lVert \sqrt{H} 
    \max_h \Biggl[ \Bigl\lVert W^{V,h} \Bigr\rVert
    \left(4\,\frac{n}{\sqrt{d/H}}\,\Bigl\lVert W^{Q,h} \Bigr\rVert\,\Bigl\lVert W^{K,h} \Bigr\rVert + 1\right) \Biggr], \\
    & C_2 = 1 + \Bigl\lVert W_{FFN} \Bigr\rVert.
\end{align*}
\end{theorem}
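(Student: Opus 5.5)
The plan is to reduce expressivity to a Lipschitz bound and then apply Markov's inequality. If the full pipeline $f = T \circ h$ (normalization $h$ followed by the single-layer Transformer $T$) satisfies $\lVert f(\tilde X) - f(X)\rVert \le L\lVert \tilde X - X\rVert$, then for every $\tilde X \in \mathcal{N}_\epsilon(X)$ we get $d_{\mathcal{Y}}(f(\tilde X),f(X)) \le L\epsilon$. Markov's inequality then gives
\[
\mathbb{P}\bigl[d_{\mathcal{Y}}(f(\tilde X),f(X))>\sigma\bigr] \le \frac{\mathbb{E}[d_{\mathcal{Y}}]}{\sigma} \le \frac{L\epsilon}{\sigma},
\]
so $\gamma = L\epsilon/\sigma$. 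It then suffices to show $L = L_{\mathrm{norm}}\,(d/(d-1))^2(1+C_1)C_2$, where $L_{\mathrm{norm}}$ is $\sqrt{\sum_i 1/v_i^2}$ in the instance case and $\sqrt{d}/v$ in the global case. I would factor $L$ as a product of per-stage constants and bound each stage separately.

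\textbf{Transformer stages.} For the attention sublayer with residual, $X \mapsto X + \mathrm{MH}(X)$, I would bound the Lipschitz constant by $1+C_1$. Each head $\mathrm{AH}_h(X) = P(X)XW^{V,h}$ with $P=\mathrm{softmax}(\cdot)$ is handled by the product rule:
\[
\lVert P(X)XW^V - P(\tilde X)\tilde X W^V\rVert \le \lVert W^V\rVert\bigl(\lVert P(X)\rVert\,\lVert X-\tilde X\rVert + \lVert P(X)-P(\tilde X)\rVert\,\lVert \tilde X\rVert\bigr).
\]
A row-stochastic $P$ contributes the ``$+1$''. The softmax Jacobian has norm at most $1/2$ per row; combining this with the derivative of the bilinear score $XW^Q(XW^K)^\top/\sqrt{d/H}$ (which has two terms, each of size about $\lVert W^Q\rVert\lVert W^K\rVert\lVert X\rVert$) and summing over $n$ rows yields the factor $4n\lVert W^Q\rVert\lVert W^K\rVert/\sqrt{d/H}$. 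This step uses that normalized inputs have bounded norm, which is exactly what normalization guarantees. Concatenating $H$ heads gives $\sqrt{H}\max_h$, and $W_O$ multiplies on the outside. The FFN sublayer with residual and 1-Lipschitz activations contributes $1+\lVert W_{FFN}\rVert = C_2$.

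\textbf{Layer normalization.} Each of the two LayerNorms I would bound by $d/(d-1)$. Centering is a projection of norm 1. Division by the (unbiased) standard deviation of a centered vector whose variance is normalized produces a $d/(d-1)$ factor once the $1/(d-1)$ versus $1/d$ variance conventions are compared, under the standing assumption that the variance is bounded below. This yields $(d/(d-1))^2$.

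\textbf{Input normalization.} This is the step that distinguishes the two cases. For instance-based standard scaling, channel $i$ is mapped to $(X_i-\mu_i)/v_i$. Treating the statistics as fixed, or noting that the mean subtraction is nonexpansive, the map is diagonal with entries $1/v_i$. Bounding its norm via Cauchy–Schwarz / Frobenius gives $\sqrt{\sum_i 1/v_i^2}$. For global scaling, a single $v$ is shared across channels, and the same Frobenius-style bound over the $d$ channels gives $\sqrt{d}/v$.

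I expect the main obstacle to be the attention-head Lipschitz constant, since softmax attention is not globally Lipschitz. The fix is to exploit the boundedness of the normalized, layer-normed inputs, so that $\lVert X\rVert$ is absorbed into the constant $4n/\sqrt{d/H}$. A secondary delicate point is justifying that the data-dependent statistics $\mu_i, v_i$ can be treated as fixed within an $\epsilon$-neighborhood; I would make this an explicit assumption that $\epsilon$ is small relative to $v_i$.
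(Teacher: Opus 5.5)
Your overall route is the paper's: write $f=g\circ h$, bound the scaling map by $\sqrt{\sum_i 1/v_i^2}$ via the diagonal/Frobenius estimate with the statistics held fixed, multiply by the Transformer factor, and finish with Markov's inequality. The scaling bound specializes to $\sqrt{d}/v$ when $v_i\equiv v$, and the sum runs over the $d$ channels, as in the paper's derivation, even though the displayed statement writes $N$. The only divergence is that you set out to derive $(d/(d-1))^2(1+C_1)C_2$ stage by stage. The paper proves nothing about $g$: it imports $\lVert g(X)-g(\tilde X)\rVert\le (d/(d-1))^2(1+C_1)C_2\,\epsilon$ wholesale from the cited prior work of Ennadir et al.\ (2025).

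You should cite it the same way, because your sketch of that factor would not go through as written.
\begin{itemize}
\item A row-stochastic $P\in\mathbb{R}^{n\times n}$ is a contraction only for row-wise norms such as $\max_i\lVert x_i\rVert$. In the spectral/Frobenius norm, $\lVert P\rVert_2$ can equal $\sqrt{n}$ (take every row equal to $e_1^\top$). So the ``$+1$'' is unjustified in the $\ell_2$ setting in which your normalization bound is computed.
\item The softmax term scales like $(\lVert X\rVert+\lVert\tilde X\rVert)\lVert\tilde X\rVert$. An input-independent $4n/\sqrt{d/H}$ therefore requires an explicit radius, roughly unit-norm tokens. Standard scaling gives unit spread per channel, hence token norms of order $\sqrt{d}$ rather than $1$, which would change $C_1$.
\item LayerNorm is only $1/s_0$-Lipschitz on inputs whose standard deviation is at least $s_0$. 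Switching between the $1/d$ and $1/(d-1)$ conventions rescales the standard deviation by $\sqrt{d/(d-1)}$, not $d/(d-1)$. That comparison cannot produce the $(d/(d-1))^2$.
\end{itemize}

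By contrast, your explicit assumption that $\mu_i,v_i$ are (nearly) constant on the $\epsilon$-ball addresses a point the paper passes over silently, and it is sound. The map $x\mapsto (x-\bar x)/\mathrm{std}(x)$ has Jacobian norm $1/\mathrm{std}(x)$, which gives a constant $1/(v_i-O(\epsilon))$ on the ball.
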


Theorem~\ref{theo:standard_normalization} provides expressivity bounds under standard normalization. Due to the translation invariance of self-attention, the mean component value does not influence the bound; instead, the key controlling factor is the variance. In the global setting, the chosen global variance directly influences expressivity. If the variance is dominated by one high-scale channel, other channels may be diminished, thus reducing the model's sensitivity to them. This effect suggests that global normalization may be suitable only when a dominant channel is semantically more important; otherwise, instance-level normalization should be preferred. Similar insights are also seen in the case of the Min-Max scaling strategy.

\begin{theorem}\label{theo:minmax_normalization}
Let $f \colon \mathcal{X} \subseteq \mathbb{R}^{n \times d} \rightarrow \mathcal{Y} \subseteq \mathbb{R}^d$ be a TBM under our defined setup. Then:
\begin{itemize}
    \item If $f$ uses \textbf{Instance-based Min-Max Normalization}, then $f$ is $(\epsilon,\sigma,\gamma)$-expressive with:
    \[
    \gamma = \left(\sqrt{\sum_{i=1}^{N} \frac{1}{(x_i^{\max} - x_i^{\min})^2}} \right)\frac{\epsilon}{\sigma} \left(\frac{d}{d-1}\right)^2 (1+C_1) C_2,
    \]
    \item If $f$ uses \textbf{Global Min-Max Normalization}, then $f$ is $(\epsilon,\sigma,\gamma)$-expressive with:
    \[
    \gamma = \frac{\epsilon\sqrt{d} }{ \sigma (x^{\max} - x^{\min})}  \left(\frac{d}{d-1}\right)^2 (1+C_1) C_2.
    \]
\end{itemize}
\begin{align*} 
    \text{with:} \hspace{0.5em} & C_1 = 1 + \lVert W_O \lVert \sqrt{H} 
    \max_h \Biggl[ \Bigl\lVert W^{V,h} \Bigr\rVert
    \left(4\,\frac{n}{\sqrt{d/H}}\,\Bigl\lVert W^{Q,h} \Bigr\rVert\,\Bigl\lVert W^{K,h} \Bigr\rVert + 1\right) \Biggr], \\
    & C_2 = 1 + \Bigl\lVert W_{FFN} \Bigr\rVert.
\end{align*}
\end{theorem}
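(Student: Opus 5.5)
The argument reduces expressivity to a Lipschitz bound, in the same way as for Theorem~\ref{theo:standard_normalization}. By Markov's inequality,
\[
\mathcal{E}_{\epsilon}[f]\le \frac{1}{\sigma}\,\mathbb{E}_{X}\bigl[\lVert f(\tilde X)-f(X)\rVert\bigr].
\]
If $f=g\circ h$, with $h$ the Min-Max preprocessing and $g$ the single-layer Transformer, then the right-hand side is at most $\mathrm{Lip}(g)\,\mathrm{Lip}(h)\,\epsilon/\sigma$. So it suffices to bound the Lipschitz constant of each stage separately and multiply.

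\textbf{Transformer stage.} I will write $g$ as
\[
g = (\mathrm{Id}+\mathrm{FFN})\circ \mathrm{LN}\circ(\mathrm{Id}+\mathrm{MH})\circ \mathrm{LN}
\]
and bound the factors one by one.
\begin{itemize}
\item \emph{Residual FFN branch.} With $1$-Lipschitz activations this branch contributes $C_2=1+\lVert W_{FFN}\rVert$.
\item \emph{Layer normalizations.} On the relevant bounded domain, each application is bounded by $d/(d-1)$. This comes from the centering projection and the $1/\mathrm{std}$ factor with Bessel's correction, and gives the factor $\left(\frac{d}{d-1}\right)^2$.
\item \emph{Residual attention branch.} I bound each head through the Jacobian of the softmax, using that its entries are bounded and the rows sum to one. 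For inputs with entries in $[0,1]$ after Min-Max scaling, each head is Lipschitz with constant $\lVert W^{V,h}\rVert\bigl(4\tfrac{n}{\sqrt{d/H}}\lVert W^{Q,h}\rVert\lVert W^{K,h}\rVert+1\bigr)$. Concatenating the heads costs a factor $\sqrt H$, and the output projection costs $\lVert W_O\rVert$. The extra residual term then gives the factor $1+C_1$.
\end{itemize}
Since I can reuse the attention estimate from Theorem~\ref{theo:standard_normalization}, the new work is in the preprocessing stage.

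\textbf{Preprocessing stage.} The plan is to treat the statistics $x_i^{\max}, x_i^{\min}$ as fixed constants, for example estimated on the training set as practice does. This makes $h$ affine.
\begin{itemize}
\item \emph{Instance-based scaling.} Here $h$ is the diagonal map $X_i\mapsto (X_i-x_i^{\min})/(x_i^{\max}-x_i^{\min})$. Its Lipschitz constant, with respect to the Frobenius norm on differences, is bounded by $\sqrt{\sum_i (x_i^{\max}-x_i^{\min})^{-2}}$, where a Cauchy--Schwarz-type step over channels gives the square-root-of-sum form.
\item \emph{Global scaling.} A single range $x^{\max}-x^{\min}$ is used for all channels. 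Summing over the $d$ channels gives $\sqrt d/(x^{\max}-x^{\min})$.
\end{itemize}
Since translation does not change differences, the $x^{\min}$ shift drops out. This parallels how the mean dropped out in the standard-normalization case.

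\textbf{Main obstacle.} The hard step is the softmax/attention Lipschitz bound. Self-attention is not globally Lipschitz, so the estimate depends on the input domain being bounded. I will use that Min-Max outputs lie in $[0,1]^{n\times d}$, so token norms are bounded, and absorb these bounds into the $4n$ constant; boundedness is also what keeps the layer-normalization constant finite. If the statistics were instead computed on each input, $h$ would only be locally Lipschitz near inputs whose ranges are nondegenerate. In that case I would restrict $\epsilon$ small enough that the argmax and argmin indices do not change, so that $h$ is again affine on $\mathcal N_\epsilon(X)$.
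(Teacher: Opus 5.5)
Your proposal follows the paper's proof: Markov's inequality, the split $f=g\circ h$, the Transformer constant $\left(\frac{d}{d-1}\right)^2(1+C_1)C_2$ taken as given (the paper just cites it from prior work rather than deriving it), and the channel-wise affine bound on Min-Max with the statistics held fixed for both $X$ and $\tilde X$, which is what the paper does implicitly. Your side remarks are unnecessary and partly inaccurate, so drop them and rely on the reused estimate as the paper does: fixing the argmin/argmax indices does not make per-input Min-Max affine (the extreme values still move with $X$, so $h$ becomes a ratio of affine maps whose local Lipschitz constant can exceed $1/(x_i^{\max}-x_i^{\min})$), the layer-norm constant is governed by a lower bound on token variance rather than by boundedness of the domain, and fixed training statistics would not keep outputs in $[0,1]$ anyway.
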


\noindent Across both normalization types, we observe that expressivity depends on both the model's weight norms and the scaling range induced by the normalization method. 

\noindent \textbf{Main Takeaways.} Our theoretical results yield several important insights into the influence of normalization strategies on the expressivity of Transformer-based models for time series. When using global normalization, the model’s sensitivity is primarily dictated by the dominant channels in terms of scale, which may suppress useful information from less prominent channels. This can be problematic in settings where fine-grained, multi-channel variation is essential. In such cases, instance-based normalization is more appropriate, as it preserves the local scale and dynamics of each channel independently. Conversely, if the downstream task depends heavily on identifying large variations in a specific channel, such as in anomaly detection, where a spike in one sensor may signal a fault — global normalization may enhance expressivity by amplifying the dominant signal. However, in tasks like motion recognition, where multiple channels (e.g., different spatial axes) carry complementary information, global normalization may obscure critical patterns, and instance-based scaling becomes preferable.

\noindent Full proofs of all theorems can be found in Appendix \ref{appendix:proof_standard} and Appendix \ref{appendix:proof_minmax}.

\renewcommand{\arraystretch}{2.0}
\setlength{\tabcolsep}{3pt}

\begin{table*}[h]
\caption{Mean (and corresponding standard deviation) Accuracy of the different considered pre-processing methods for different models and datasets. "None" corresponds to the case where the time series is used without normalization. }
\label{tab:results_classification}
\resizebox{\textwidth}{!}{%
\begin{tabular}{llccccccccc}
\hline
                Model        & Normalization & EthanolConcentration & FaceDetection    & Handwriting      & Heartbeat        & JapaneseVowels   & PEMS-SF          & SelfRegulationSCP1 & SpokenArabicDigits & UWaveGestureLibrary \\ \hline
\multirow{6}{*}{\rotatebox{90}{Transformer}} & Standard (Instance)      & $28.77 \pm 0.90$     & $67.21 \pm 0.47$ & $37.57 \pm 0.53$ & $73.01 \pm 0.46$ & $79.82 \pm 0.34$ & $88.05 \pm 2.68$ & $85.44 \pm 1.86$   & $98.27 \pm 0.26$   & $86.77 \pm 0.39$    \\
                             & Standard (Global)      & $31.05 \pm 0.47$     & $67.45 \pm 0.68$ & $37.18 \pm 0.29$ & $77.24 \pm 0.61$ & $97.84 \pm 0.22$ & $85.36 \pm 1.79$ & $90.10 \pm 0.28$   & $98.42 \pm 0.09$   & $86.77 \pm 0.39$    \\ \cline{2-11} 
                             & None          & $30.93 \pm 0.72$     & $67.91 \pm 0.28$ & $37.57 \pm 0.53$ & $74.80 \pm 0.46$ & $98.11 \pm 0.00$ & $90.56 \pm 0.55$ & $91.81 \pm 0.74$   & $98.59 \pm 0.04$   & $86.77 \pm 0.39$    \\
                             & Minmax        & $29.91 \pm 1.53$     & $50.01 \pm 0.01$ & $27.80 \pm 0.84$ & $75.12 \pm 0.80$ & $97.48 \pm 0.34$ & $87.48 \pm 0.72$ & $91.70 \pm 1.13$   & $98.21 \pm 0.29$   & $83.75 \pm 0.68$    \\
                             & Quantile      & $36.12 \pm 2.54$     & $67.39 \pm 0.57$ & $30.12 \pm 0.88$ & $77.40 \pm 0.83$ & $97.12 \pm 0.46$ & $87.48 \pm 1.66$ & $89.42 \pm 1.28$   & $98.42 \pm 0.17$   & $87.08 \pm 0.15$    \\
                             & Robust        & $32.07 \pm 2.68$     & $68.11 \pm 0.75$ & $37.45 \pm 0.84$ & $75.77 \pm 1.28$ & $97.57 \pm 0.44$ & $82.66 \pm 1.42$ & $91.24 \pm 0.90$   & $98.54 \pm 0.20$   & $86.67 \pm 0.39$    \\ \hline
\multirow{6}{*}{\rotatebox{90}{PatchTST}}    & Standard (Instance)      & $26.49 \pm 0.95$     & $67.39 \pm 1.38$ & $27.88 \pm 1.01$ & $71.54 \pm 1.00$ & $70.36 \pm 1.00$ & $84.39 \pm 0.47$ & $85.55 \pm 1.58$   & $96.65 \pm 0.44$   & $85.52 \pm 0.74$    \\
                             & Standard (Global)      & $27.25 \pm 0.65$     & $67.61 \pm 0.38$ & $27.88 \pm 1.01$ & $72.36 \pm 1.61$ & $95.68 \pm 0.22$ & $83.62 \pm 1.09$ & $85.78 \pm 1.68$   & $96.50 \pm 0.30$   & $85.52 \pm 0.74$    \\ \cline{2-11} 
                             & None          & $26.36 \pm 0.36$     & $67.46 \pm 0.84$ & $27.88 \pm 1.01$ & $72.20 \pm 0.40$ & $91.53 \pm 0.46$ & $82.85 \pm 2.13$ & $85.67 \pm 1.47$   & $96.29 \pm 0.43$   & $85.52 \pm 0.74$    \\
                             & MinMax        & $26.48 \pm 0.64$     & $68.14 \pm 0.52$ & $27.88 \pm 1.01$ & $72.35 \pm 0.45$ & $84.52 \pm 0.14$ & $85.35 \pm 1.78$ & $85.89 \pm 1.81$   & $97.21 \pm 0.15$   & $85.52 \pm 0.74$    \\
                             & Quantile      & $34.22 \pm 0.62$     & $67.42 \pm 0.18$ & $26.43 \pm 0.73$ & $72.68 \pm 0.80$ & $95.32 \pm 0.34$ & $82.66 \pm 1.70$ & $84.53 \pm 0.70$   & $96.74 \pm 0.15$   & $85.83 \pm 0.29$    \\
                             & Robust        & $27.12 \pm 1.09$     & $68.31 \pm 0.14$ & $27.88 \pm 1.01$ & $71.38 \pm 1.22$ & $95.14 \pm 0.22$ & $82.85 \pm 2.72$ & $85.55 \pm 1.58$   & $96.70 \pm 0.49$   & $85.52 \pm 0.74$    \\ \hline
\end{tabular}%
}
\end{table*}

\section{Experimental Validation}

In this section, we empirically validate the theoretical insights derived in the previous sections using real-world time series datasets and multiple Transformer-based architectures. We start by outlining our experimental setup and then report and analyze the different results.

\begin{figure*}[h]
    \centering
    \includegraphics[width=\linewidth]{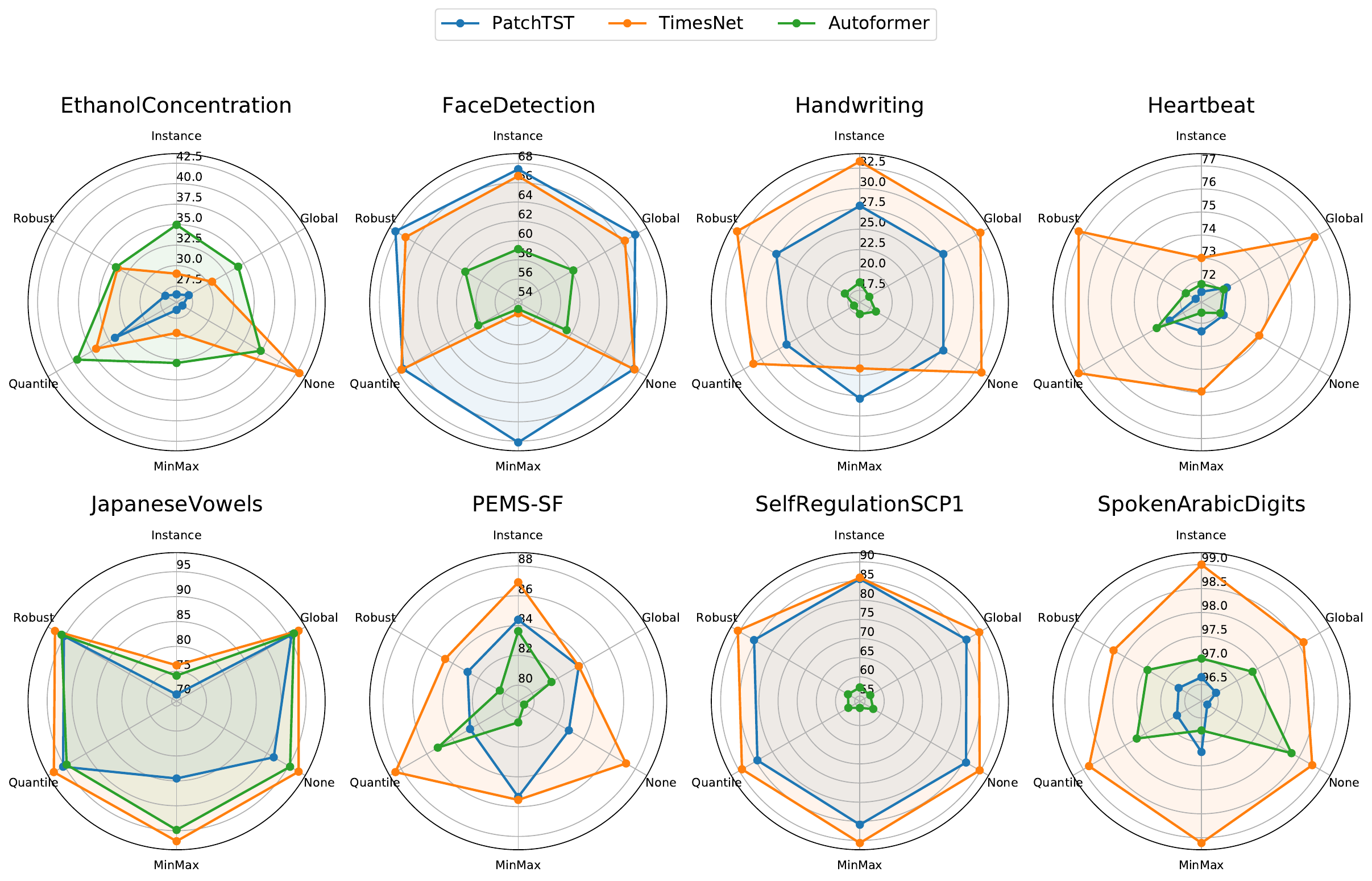}
    \caption{Resulting Accuracy Comparison of the different considered pre-processing methods for different models and datasets for the classification task. "None" corresponds to the case where the time series is used without normalization.}
    \label{fig:radar_chart_results}
\end{figure*}

\subsection{Experimental Setup}

We focus on two common downstream tasks in time series representation learning: classification and forecasting. In the classification setting, the goal is to predict the class label associated with a given time series instance. In the forecasting task, the objective is to predict future values based on a partially observed sequence.

\noindent While our theoretical analysis centers on a vanilla Transformer architecture, we hypothesize that the derived insights generalize to more advanced adaptations. To test this, we evaluate the effect of different normalization strategies across several architectures, including a standard Transformer, PatchTST~\citep{nie2023patchtst}, Autoformer~\citep{wu2021autoformer}, and TimesNet~\citep{wu2023timesnet}.

\noindent For classification, we use benchmark datasets from the UEA time series classification repository~\citep{bagnall2018uea}. Each experiment is repeated five times to mitigate the effects of stochasticity due to random initialization. All models are trained using the Adam optimizer. For classification, we optimize the cross-entropy loss, whereas for forecasting, we use the $\ell_2$ loss and report the Mean Absolute Error (MAE).

\noindent Additional implementation details, including architectural, optimization hyperparameters, and details about the datasets, are provided in the appendix.

\subsection{Experimental Results — Classification}\label{sec:experimental_results_classification}

Table~\ref{tab:results_classification} and Figure~\ref{fig:radar_chart_results} reports the classification accuracy and standard deviation across different models and normalization strategies for several benchmark datasets.

\noindent  We start by validating our theoretical insights regarding the distinction between instance-based and global normalization under the Standard scaling method. While global normalization often yields better performance overall, we observe notable exceptions. For example, on datasets such as \textit{PEMS-SF} and \textit{Handwriting}, instance-based normalization outperforms its global counterpart. These cases align with our theoretical claims: when specific local channel dynamics are important, global normalization can obscure critical features, leading to degraded performance. Conversely, on datasets like \textit{UWaveGestureLibrary}, where the inputs are already normalized by design, all strategies, including global, instance-based, and no normalization, yield comparable results.

\noindent To deepen the analysis, we compare the performance of different normalization techniques beyond just instance versus global. The results reveal that no single normalization strategy universally outperforms the others across all models and datasets. Instead, the most effective approach appears to be highly context-dependent. In particular, there are cases where omitting normalization altogether (denoted as \textsc{None}) leads to the best performance. This finding is especially striking, as it challenges the common assumption that normalization is always beneficial.

\noindent In summary, our experimental findings corroborate the theoretical analysis: preprocessing choices have a significant impact on downstream performance, and selecting the appropriate strategy requires consideration of both the dataset characteristics and the task-specific objectives.

\noindent While we have chosen to represent the results in some model in form of Radar Chart, the full results are also presented in a Table-like format in Appendix \ref{appendix:additional_results}. 

\begin{table}[h]
\caption{MAE  (and corresponding standard deviation)  for each model under different normalization strategies for long-term forecasting task on ETT-Small Dataset.}
\label{tab:table_forecasting}
\centering
\resizebox{0.7\columnwidth}{!}{%
\begin{tabular}{lccc}
\hline
\textbf{Normalization} & \textbf{Transformer} & \textbf{Autoformer} & \textbf{PatchTST} \\ \hline
Instance (Standard)    & 1.0023 ± 0.0041      & 0.9818 ± 0.0281     & 0.8010 ± 0.0029   \\
Global (Standard)      & 0.7617 ± 0.0512      & 0.4593 ± 0.0023     & 0.4046 ± 0.0004   \\ \hline
None                   & 2.9003 ± 0.1844      & 1.7321 ± 0.0362     & 1.4864 ± 0.0047   \\
MinMax                 & 0.7912 ± 0.0010       & 0.6512 ± 0.0020      & 0.5819 ± 0.0001    \\
Quantile               & 0.7717 ± 0.0339      & 0.4746 ± 0.0181     & 0.4184 ± 0.0015   \\
Robust                 & 0.7327 ± 0.0879      & 0.4583 ± 0.0039     & 0.4118 ± 0.0020   \\ \hline
\end{tabular}%
}
\end{table}

\subsection{Experimental Results — Forecasting}

\noindent For the forecasting task, we evaluate the different normalization strategies using the ETT-Small dataset. Table~\ref{tab:table_forecasting} reports the mean MAE along with the corresponding standard deviation across different models and preprocessing configurations. 

\begin{wrapfigure}{r}{0.68\linewidth}
    \centering
    \includegraphics[width=\linewidth]{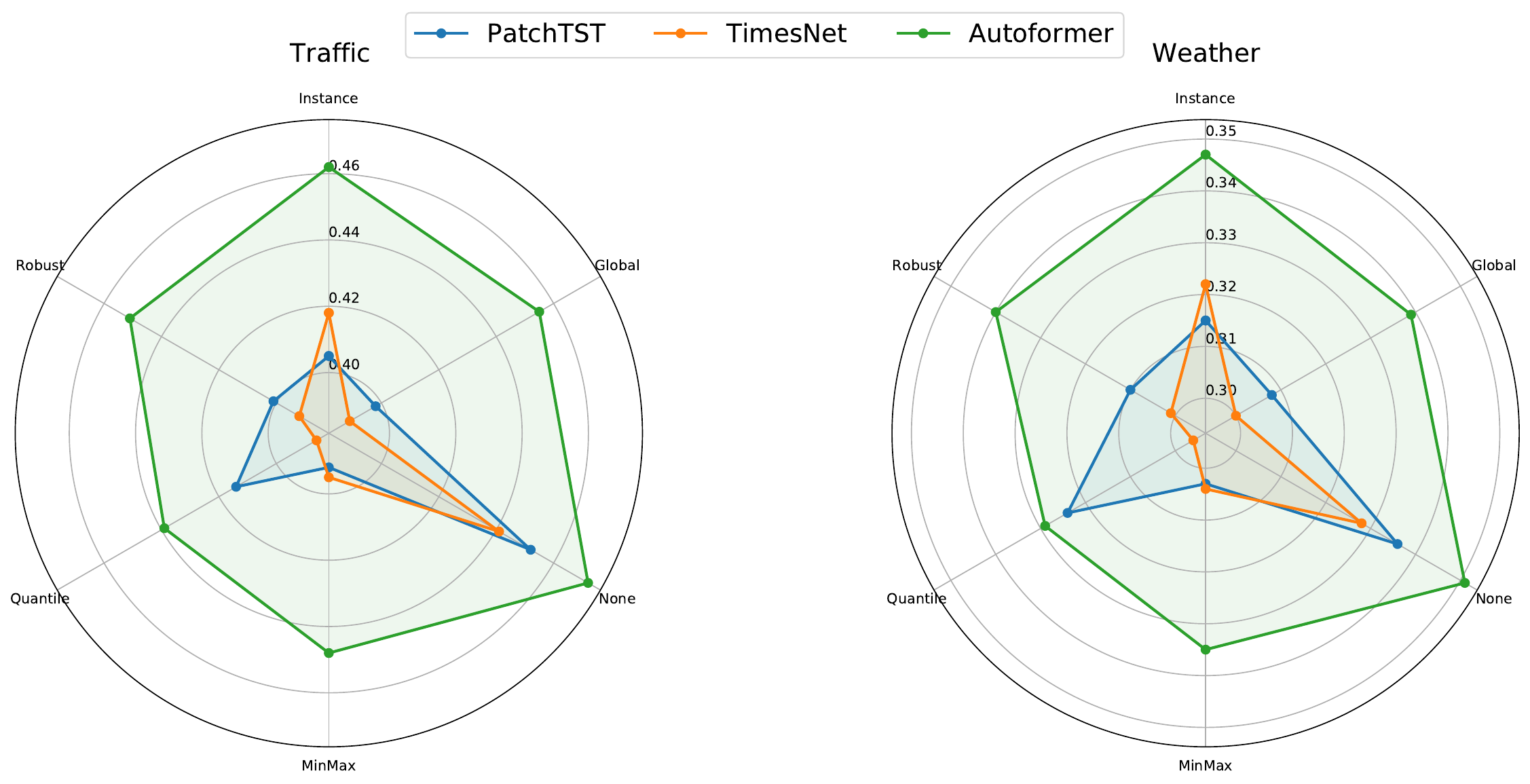}
    \caption{Resulting MAE of the different considered pre-processing methods for different models and datasets for the forecasting task. ``None'' corresponds to the case where the time series is used without normalization.}
    \label{fig:Figure_Forecasting}
\end{wrapfigure}

Interestingly, for this particular task, we observe that global normalization under the Standard scaling method consistently outperforms its instance-based counterpart. This suggests that, in the context of forecasting, preserving global scale information may be more beneficial, possibly because the temporal dependencies being modeled rely on longer-range trends rather than fine-grained, channel-specific variations.

\noindent Nevertheless, and in line with our observations from the classification task, no single normalization strategy emerges as universally optimal. This highlights once again the sensitivity of Transformer-based models to preprocessing choices and emphasizes the importance of selecting the normalization method in accordance with the specific characteristics of the dataset and task.

\noindent These results further support our central thesis: normalization is not merely a routine preprocessing step, but a critical design choice that can significantly affect the expressivity and effectiveness of time series models.

\noindent Similar trends can be observed across other forecasting datasets, including Traffic and Weather. As illustrated in Figure~\ref{fig:Figure_Forecasting}, the choice of pre-processing method has a noticeable impact on forecasting performance. In particular, we observe clear variations in MAE across the different approaches, highlighting the importance of careful data preparation.

\section{Conclusion}

In this work, we investigated the role of input normalization, an often-overlooked but ubiquitous preprocessing step, in Transformer-based time series representation models. Our primary contribution is a theoretical and empirical examination of how different normalization strategies, particularly instance-based versus global scaling, affect the expressivity and performance of such models.

\noindent Our findings demonstrate that there is no universal normalization strategy that guarantees optimal performance across all tasks and datasets. Instead, the choice of normalization should be guided by the underlying characteristics of the data and the requirements of the downstream task. Our theoretical framework provides an interpretable lens through which to understand how scaling choices propagate through the model, while our empirical results confirm the practical impact of these decisions. In short, there is no "free lunch" in preprocessing; careful selection of normalization strategies is essential for robust and effective time series modeling.

\noindent \textbf{Limitations and Future Work.} While we do not propose a new normalization method that performs uniformly well across all settings, we provide a theoretical foundation and empirical observations that could inform the design of more adaptive or task-aware scaling approaches. We believe that this study opens promising directions for future research on principled preprocessing methods that could further enhance the performance and generalization capabilities of time series models.

\bibliographystyle{plainnat}
\bibliography{references}

\newpage
\appendix
\vbox{%
\hsize\linewidth
\linewidth\hsize
\vskip 0.1in
\centering
{\LARGE\bf Appendix: Be Wary of Your Time Series Preprocessing\par}
\vspace{2\baselineskip}
}

\section{Proof Of Theorem \ref{theo:standard_normalization}}\label{appendix:proof_standard}

\begin{theorem}
Let $f \colon \mathcal{X} \subseteq \mathbb{R}^{n \times d} \rightarrow \mathcal{Y} \subseteq \mathbb{R}^d$ be a Transformer-based model (TBM) under the setting described in our problem formulation. Then:
\begin{itemize}
    \item If $f$ uses \textbf{Instance-based Standard Normalization}, then $f$ is $(\epsilon,\sigma,\gamma)$-expressive with:
    \[
    \gamma = \left(\sqrt{\sum_{i=1}^{N} \frac{1}{v_i^2}} \right)\frac{\epsilon}{\sigma} \left(\frac{d}{d-1}\right)^2 (1+C_1) C_2,
    \]
    \item If $f$ uses \textbf{Global Standard Normalization}, then $f$ is $(\epsilon,\sigma,\gamma)$-expressive with:
    \[
    \gamma = \frac{\epsilon\sqrt{d} }{ \sigma v}  \left(\frac{d}{d-1}\right)^2 (1+C_1) C_2,
    \]
\end{itemize}
\begin{align*} 
    \text{with:} \hspace{0.5em} &  C_1 = \lVert W_O \lVert \sqrt{H} 
    \max_h \Biggl[ \Bigl\lVert W^{V,h} \Bigr\rVert
    \left(4\,\frac{n}{\sqrt{d/H}}\,\Bigl\lVert W^{Q,h} \Bigr\rVert\,\Bigl\lVert W^{K,h} \Bigr\rVert + 1\right) \Biggr], \\
    & C_2 = 1 + \Bigl\lVert W_{FFN} \Bigr\rVert.
\end{align*}
\end{theorem}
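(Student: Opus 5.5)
The plan is to bound the expressivity probability through a Lipschitz estimate combined with Markov's inequality. For any $\tilde X \in \mathcal{N}_\epsilon(X)$, suppose we can show $\lVert f(\tilde X) - f(X)\rVert \le L\,\lVert \tilde X - X\rVert \le L\epsilon$ for an explicit constant $L$. Markov's inequality then gives $\mathbb{P}[\lVert f(\tilde X)-f(X)\rVert > \sigma] \le \mathbb{E}\lVert f(\tilde X)-f(X)\rVert/\sigma \le L\epsilon/\sigma$. It therefore suffices to compute $L$ for the composite map $f = \mathrm{FFN}\circ(\mathrm{Id}+\mathrm{MH})\circ \mathrm{LN}\circ h$, where $h$ is the normalization. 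The constants should then factor as $L = \mathrm{Lip}(h)\cdot(\tfrac{d}{d-1})^2\cdot(1+C_1)\cdot C_2$.

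I would establish the factors in order, from the input side outward.

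\emph{Normalization.} I treat the statistics as fixed scalars, as is implicit in the statement. Instance standard normalization acts on channel $i$ as $x\mapsto (x-\mu_i)/v_i$. This is a diagonal affine map, and its operator norm can be bounded by Cauchy--Schwarz by $\sqrt{\sum_i 1/v_i^2}$, which is the stated factor. The mean cancels in differences, which reflects the translation invariance noted after the theorem. In the global case every channel is divided by the same $v$, and bounding the Frobenius norm over the $d$ channels gives $\sqrt d/v$.

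\emph{Layer normalization.} I would use a Lipschitz bound for centering followed by rescaling. The centering projection $I-\tfrac1d\mathbf{1}\mathbf{1}^\top$ together with the unbiased variance convention $d/(d-1)$ contributes a factor $d/(d-1)$. Since layer normalization is applied twice (before attention and before the FFN), the total contribution is $(d/(d-1))^2$. I would cite or reprove the standard bound, assuming the variance is bounded below after the preceding normalization.

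\emph{Attention.} I would use the residual form $X\mapsto X+\mathrm{MH}(X)$, whose Lipschitz constant is at most $1+\mathrm{Lip}(\mathrm{MH})$. For $\mathrm{MH}$, concatenating the heads and multiplying by $W_O$ gives $\lVert W_O\rVert\sqrt H\max_h \mathrm{Lip}(\mathrm{AH}_h)$. For a single head I would write $\mathrm{AH}(X)=P(X)XW^V$ and apply the product rule:
$$\lVert P(\tilde X)\tilde XW^V - P(X)XW^V\rVert \le \lVert W^V\rVert\bigl(\lVert P\rVert\,\lVert\tilde X-X\rVert + \lVert X\rVert\,\lVert P(\tilde X)-P(X)\rVert\bigr).$$
Here $P$ is row-stochastic, so $\lVert P\rVert\le 1$ up to the chosen norm, which produces the "$+1$" term. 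The softmax Jacobian has norm at most $1/2$ per row; summed over the $n$ rows and combined with differencing the bilinear form $XW^Q(XW^K)^\top/\sqrt{d/H}$, this should produce the $4n\lVert W^Q\rVert\lVert W^K\rVert/\sqrt{d/H}$ term.

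\emph{FFN.} With residual connections and $1$-Lipschitz activations, the FFN contributes $1+\lVert W_{FFN}\rVert = C_2$.

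The main obstacle is the softmax/bilinear step. The difference of scores $\tilde X W^Q W^{K\top}\tilde X^\top - XW^QW^{K\top}X^\top$ is quadratic in the input, so a clean Lipschitz constant requires $\lVert X\rVert$ to be bounded. This is where the normalization helps: after layer normalization each token has norm at most $\sqrt d$, and this bound is then absorbed together with the factor $2\cdot 2$ (two cross terms, and the two occurrences of $\lVert X\rVert$ in the product rule) into the constant $4n$. I would state this boundedness explicitly as the justification and absorb any leftover dimension factors into the chosen norm convention so that the result matches $C_1$. Multiplying all the factors and applying Markov's inequality then yields the two stated values of $\gamma$.
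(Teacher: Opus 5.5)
Your outer argument is the paper's own. You write $f=g\circ h$, bound $\lVert h(X)-h(\tilde X)\rVert$ by $\bigl(\sum_i v_i^{-2}\bigr)^{1/2}\epsilon$ (resp.\ $\sqrt d\,\epsilon/v$), holding the statistics fixed for $X$ and $\tilde X$ exactly as the paper implicitly does. You then multiply by a Lipschitz constant of the one-layer Transformer $g$ and finish with Markov's inequality. That computation sums over the $d$ channels, so the upper index $N$ in the stated instance-based $\gamma$ is a typo inherited from the paper. Where you depart from the paper is the factor $\bigl(\tfrac{d}{d-1}\bigr)^2(1+C_1)C_2$. The paper never derives it; it imports $\lVert g(X)-g(\tilde X)\rVert\le\bigl(\tfrac{d}{d-1}\bigr)^2(1+C_1)C_2\,\epsilon$ verbatim from prior work \citep{ennadir2025poolwiselyeffectpooling}. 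Your sketched re-derivation is the weak point, because the steps you describe do not produce these constants.

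\emph{Layer normalization.} Its Jacobian is $\tfrac1s\bigl(I-\tfrac1d\mathbf 1\mathbf 1^\top-\tfrac1d yy^\top\bigr)$, where $s$ is the token's standard deviation and $y$ the output. So its Lipschitz constant is $\sup 1/s$ (or $1/\sqrt\eta$ with a stabiliser), not $d/(d-1)$. Moreover, the upstream channel-wise standardization gives no lower bound on a token's across-channel variance: a token with $x_{t,i}=\mu_i+c\,v_i$ for all $i$ becomes a constant vector.

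\emph{Attention.} In the $\ell_2$/Frobenius setting of the paper, a row-stochastic $P$ only satisfies $\lVert P\rVert_2\le\sqrt n$ (approached when all queries concentrate on one key), so the ``$+1$'' is not free. Applying the row-wise softmax bound $1/2$ over all rows costs no factor $n$ in Frobenius norm, so the $n$ does not arise the way you say. And with $\lVert x_i\rVert\le\sqrt d$ you get $\lVert X\rVert\le\sqrt{nd}$, which enters twice: in the score difference and in the product-rule term. Carried out as you describe, your route gives a per-head constant of order $\lVert W^V\rVert\bigl(\sqrt n+nd\,\lVert W^Q\rVert\lVert W^K\rVert/\sqrt{d/H}\bigr)$. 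The discrepancy with $C_1$ cannot be ``absorbed into the norm convention'' without changing the constant being proved.

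Also, your composite $\mathrm{FFN}\circ(\mathrm{Id}+\mathrm{MH})\circ\mathrm{LN}\circ h$ contains one layer norm, yet you count two. The fix is to do what the paper does: invoke the Transformer bound as a black box, stating the hypotheses under which it holds. Then your proof coincides with the paper's. A genuinely self-contained derivation along your lines would prove the theorem only with different, larger constants.
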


\begin{proof}

Let $g$ be a transformer-based model following the considered problem setup. Specifically, let $g$ be a $1$-Layer TBM. 

\noindent Let's now consider an input $X \in \mathcal{X}$ composed of $n$ tokens $x_i \in \mathbb{R}^{d}$. We consider that our model $f$ is built using the dot-product self-attention which can be formulated as:
\begin{align*}
    \text{AH}(x) &= \mathrm{Softmax}(\frac{(XW^Q)(XW^K)^T}{\sqrt{\frac{D}{H}}})XW^V \\
    & = PXW^V = h(X) W^V,
\end{align*}

\noindent In this following part, we consider that the pre-processing is done using a standard normalization, which consists of taking into account the mean and the variance. We consider the specific case of instance-based, which afterwards easily be adapted for the global case:

$$h(x_i) = \frac{x_i - \mu_i}{v_i},$$ 

\noindent with $\mu_i$ being the mean value and $v_i$ the corresponding variance. 

\noindent The final model $f$ can be therefore be written as the composition of the two functions:

\begin{equation}\label{eq:union_function}
    f(X) = g \circ h (X)
\end{equation}

\noindent Consequently, the final expressivity bound of $f$ is simply the multiplication of the two bounds.

\noindent Let's consider a similar element to the considered input $X$, denoted as $\tilde{X} \in \mathcal{N}_{\epsilon}(X)$, based on previous work \citep{ennadir2025poolwiselyeffectpooling}, we have the following:

\begin{align*}
    \lVert  g(X) - g(\tilde{X})  \lVert \leq \left(\frac{d}{d-1}\right)^2 (1+C_1) C_2 \epsilon,
    \end{align*}
\begin{align*} 
    & \text{with:} \hspace{0.5em}  C_1 = \lVert W_O \lVert \sqrt{H} 
    \max_h \Biggl[ \Bigl\lVert W^{V,h} \Bigr\rVert \\
    &\hspace{7em}\left(4\,\frac{n}{\sqrt{d/H}}\,\Bigl\lVert W^{Q,h} \Bigr\rVert\,\Bigl\lVert W^{K,h} \Bigr\rVert + 1\right) \Biggr], \\
    & C_2 = 1 + \Bigl\lVert W_{FFN} \Bigr\rVert.
\end{align*}

\noindent Let's now consider the effect of normalization, we have the following:

\begin{align*}
    \lVert h(X) - h(\tilde{X}) \lVert & \leq 
    \begin{bmatrix}
        \frac{x_1 - \mu_1}{v_1} - \frac{{\tilde{x}}_1 - \mu_1}{v_1}\\
        \ldots \\
         \frac{x_d - \mu_d}{v_d} - \frac{{\tilde{x}}_d - \mu_d}{v_d}
    \end{bmatrix}
    \leq
        \begin{bmatrix}
        \frac{x_1 - {\tilde{x}}_1}{v_1}\\
        \ldots \\
         \frac{x_d - {\tilde{x}}_d}{v_d}
    \end{bmatrix} \\
    & \leq \left(\sqrt{\sum_{i=1}^{d} \frac{1}{v_i^2}} \right) \epsilon 
\end{align*}

\noindent Given decomposition of the function $f$ in Equation \ref{eq:union_function}, we can write:

\begin{align*}
    \lVert f(X) - f(\tilde{X}) \lVert \leq \left(\sqrt{\sum_{i=1}^{d} \frac{1}{v_i^2}} \right)  \left(\frac{d}{d-1}\right)^2 (1+C_1) C_2 \epsilon
\end{align*}

\noindent For the Global-based scaling, we use the same values for the scaling parameters which can be written as: $\forall i \leq d, v_i = v$

\noindent In this perspective, we can directly deduce from the instance-based derived bound that:
\begin{align*}
    \lVert f(X) - f(\tilde{X}) \lVert & \leq \left(\sqrt{\sum_{i=1}^{d} \frac{1}{v^2}} \right)  \left(\frac{d}{d-1}\right)^2 (1+C_1) C_2 \epsilon \\
    & \leq \frac{\sqrt{d}}{v}  \left(\frac{d}{d-1}\right)^2 (1+C_1) C_2 \epsilon.
\end{align*}

\noindent For both inequalities, by taking the Markov Inequality, we get the desired bound, specifically:

\begin{itemize}
    \item For Global-based scaling: $$\gamma = \frac{\epsilon\sqrt{d} }{ \sigma v}  \left(\frac{d}{d-1}\right)^2 (1+C_1) C_2.$$
    \item For Instance-based scaling: $$\gamma =  \left(\sqrt{\sum_{i=1}^{N} \frac{1}{v_i^2}} \right)\frac{\epsilon}{\sigma} \left(\frac{d}{d-1}\right)^2 (1+C_1) C_2,$$
\end{itemize}

\begin{align*} 
    & \text{with:} \hspace{0.5em}  C_1 = \lVert W_O \lVert \sqrt{H} 
    \max_h \Biggl[ \Bigl\lVert W^{V,h} \Bigr\rVert \\
    &\hspace{7em}\left(4\,\frac{n}{\sqrt{d/H}}\,\Bigl\lVert W^{Q,h} \Bigr\rVert\,\Bigl\lVert W^{K,h} \Bigr\rVert + 1\right) \Biggr], \\
    & C_2 = 1 + \Bigl\lVert W_{FFN} \Bigr\rVert.
\end{align*}

\end{proof}

\section{Proof Of Theorem \ref{theo:minmax_normalization}}\label{appendix:proof_minmax}

\begin{theorem}
Let $f \colon \mathcal{X} \subseteq \mathbb{R}^{n \times d} \rightarrow \mathcal{Y} \subseteq \mathbb{R}^d$ be a TBM under our defined setup. Then:
\begin{itemize}
    \item If $f$ uses \textbf{Instance-based Min-Max Normalization}, then $f$ is $(\epsilon,\sigma,\gamma)$-expressive with:
    \[
    \gamma = \left(\sqrt{\sum_{i=1}^{N} \frac{1}{(x_i^{\max} - x_i^{\min})^2}} \right)\frac{\epsilon}{\sigma} \left(\frac{d}{d-1}\right)^2 (1+C_1) C_2,
    \]
    \item If $f$ uses \textbf{Global Min-Max Normalization}, then $f$ is $(\epsilon,\sigma,\gamma)$-expressive with:

    \[
    \gamma = \frac{\epsilon\sqrt{d} }{ \sigma (x^{\max} - x^{\min})}  \left(\frac{d}{d-1}\right)^2 (1+C_1) C_2.
    \]
\end{itemize}
\begin{align*}
    \text{with:} \hspace{0.5em} & C_1 = 1 + \lVert W_O \lVert \sqrt{H} 
    \max_h \Biggl[ \Bigl\lVert W^{V,h} \Bigr\rVert \left(4\,\frac{n}{\sqrt{d/H}}\,\Bigl\lVert W^{Q,h} \Bigr\rVert\,\Bigl\lVert W^{K,h} \Bigr\rVert + 1\right) \Biggr], \\
    & C_2 = 1 + \Bigl\lVert W_{FFN} \Bigr\rVert.
\end{align*}
\end{theorem}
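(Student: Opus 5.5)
The plan mirrors the proof of Theorem~\ref{theo:standard_normalization}. We write the model as the composition $f = g \circ h$ as in Equation~\ref{eq:union_function}. Here $g$ is the single-layer TBM of our problem setup and $h$ is the Min-Max preprocessing map, which acts channelwise as
\[
h(x_i) = \frac{x_i - x_i^{\min}}{x_i^{\max} - x_i^{\min}}
\]
in the instance-based case. In the global case a single pair $(x^{\min}, x^{\max})$ is used for all channels. Since $g$ is unchanged from the standard-normalization setting, we reuse the Lipschitz-type estimate imported from \citep{ennadir2025poolwiselyeffectpooling}:
\[
\lVert g(Z) - g(\tilde Z)\rVert \leq \left(\frac{d}{d-1}\right)^2 (1+C_1) C_2 \,\lVert Z - \tilde Z\rVert .
\]
All that remains is to bound the sensitivity of $h$ and then compose.

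\textbf{Step 1: bound $h$.} Take $\tilde X \in \mathcal{N}_\epsilon(X)$. We treat the scaling statistics $x_i^{\min}, x_i^{\max}$ as fixed, either precomputed or shared between $X$ and $\tilde X$, just as $\mu_i, v_i$ were treated in the previous proof. The offsets $x_i^{\min}$ then cancel, and channel $i$ of $h(X) - h(\tilde X)$ equals $(x_i - \tilde x_i)/(x_i^{\max} - x_i^{\min})$. Applying Cauchy--Schwarz across channels, with each channel's deviation bounded by $\epsilon$ as in the standard case, gives
\[
\lVert h(X) - h(\tilde X)\rVert \leq \left(\sqrt{\sum_i (x_i^{\max} - x_i^{\min})^{-2}}\right)\epsilon .
\]
In the global case all denominators are equal, so the factor collapses to $\sqrt{d}/(x^{\max} - x^{\min})$.

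\textbf{Step 2: compose and apply Markov.} Chaining the two bounds gives
\[
\lVert f(X) - f(\tilde X)\rVert \leq L_h \left(\frac{d}{d-1}\right)^2 (1+C_1) C_2\, \epsilon ,
\]
where $L_h$ is the constant from Step 1. Markov's inequality applied to the nonnegative random variable $d_{\mathcal Y}(f(\tilde X), f(X))$ then gives
\[
\mathbb{P}\bigl[d_{\mathcal Y}(f(\tilde X), f(X)) > \sigma\bigr] \leq \frac{\mathbb{E}\bigl[d_{\mathcal Y}(f(\tilde X), f(X))\bigr]}{\sigma} \leq \frac{L_h \left(\frac{d}{d-1}\right)^2 (1+C_1) C_2\,\epsilon}{\sigma}.
\]
This is the claimed $\gamma$ in both cases. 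Note that the stated $C_1$ carries an extra additive $1$ compared with Theorem~\ref{theo:standard_normalization}. Any constant that dominates the attention Lipschitz bound yields a valid, looser upper bound, so we either absorb the extra term or cite the residual-connection variant of the imported estimate.

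\textbf{Main obstacle.} The delicate step is Step 1. If the min and max are recomputed from $\tilde X$ itself, as in true per-instance Min-Max, then $h$ is not globally Lipschitz: when $x_i^{\max} - x_i^{\min} \to 0$ the map blows up, and the statistics themselves move with $\tilde X$. We will handle this by fixing the statistics, which is the same convention used implicitly in the standard case. As a fallback, we would assume $\epsilon$ is small relative to the range, bound the perturbation of $\max$ and $\min$ by $\epsilon$, and accept an extra constant factor.
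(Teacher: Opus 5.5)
Your proposal is correct and follows essentially the same route as the paper's proof. Like the paper, you write $f = g\circ h$, reuse the imported Lipschitz-type estimate for the single-layer TBM, and bound the Min-Max map channelwise with the statistics held fixed, so that the offsets cancel and the global case collapses to $\sqrt{d}/(x^{\max}-x^{\min})$. You then compose the two bounds and conclude with Markov's inequality. Your two explicit remarks are slightly more careful than the paper. The paper fixes the min/max without comment, and it ends its proof with a $C_1$ that lacks the additive $1$. Your observation that enlarging $C_1$ only enlarges $\gamma$, which yields a weaker but still valid statement, is exactly what reconciles that proof with the stated constant.
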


\begin{proof}
\noindent Similar to the previous proof, we follow the same setup, where we consider that $f$ is a $1$-Layer TBM. In this case, we rather consider the MinMax scaling, which can be formulated for each channel $i$ in the case of instance-based as:

$$h_i(x_i) = \frac{x_i - x_{min}}{x_i^{\max} - x_i^{\min}}$$

\noindent Similar to the previous proof, the function can be written as a decomposition of two elements: 
$$f(X) = g \circ h(X)$$

\noindent In the case of MinMax scaling, we can write the following:
\begin{align*}
    \lVert h(X) - h(\tilde{X}) \lVert & \leq 
    \begin{bmatrix}
        \frac{x_1 - x_1^{min}}{x_1^{\max} - x_1^{\min}} - \frac{{\tilde{x}}_1 - x_1^{min}}{x_1^{\max} - x_1^{\min}}\\
        \ldots \\
         \frac{x_d - x_d^{min}}{x_d^{\max} - x_d^{\min}} - \frac{{\tilde{x}}_d - x_d^{min}}{x_d^{\max} - x_d^{\min}}
    \end{bmatrix} \\
    & \leq
        \begin{bmatrix}
        \frac{x_1 - {\tilde{x}}_1}{x_1^{\max} - x_1^{\min}}\\
        \ldots \\
         \frac{x_d - {\tilde{x}}_d}{x_d^{\max} - x_d^{\min}}
    \end{bmatrix} \\
    & \leq \left(\sqrt{\sum_{i=1}^{d} \frac{1}{(x_i^{\max} - x_i^{\min})^2}} \right) \epsilon 
\end{align*}

\noindent Given decomposition of the function $f$ in Equation \ref{eq:union_function}, we can write:

\begin{center}
$
    \lVert f(X) - f(\tilde{X}) \lVert \leq \left(\sqrt{\sum_{i=1}^{d} \frac{1}{(x_i^{\max} - x_i^{\min})^2}} \right)  \left(\frac{d}{d-1}\right)^2 (1+C_1) C_2 \epsilon
$
\end{center}

\noindent Similar to the previous proof, in the case of global context where the same values of $x^{max}$ and $x^{min}$ are used, we can write:

\begin{center}
$
\begin{aligned}
\lVert f(X) - f(\tilde{X}) \rVert 
&\leq 
\left(\sqrt{\sum_{i=1}^{d} \frac{1}{(x^{\max} - x^{\min})^2}} \right)
\left(\frac{d}{d-1}\right)^2 (1+C_1) C_2 \epsilon \\
&\leq 
\frac{\sqrt{d}}{(x^{\max} - x^{\min})}
\left(\frac{d}{d-1}\right)^2 (1+C_1) C_2 \epsilon
\end{aligned}
$
\end{center}

\renewcommand{\arraystretch}{2}
\setlength{\tabcolsep}{2pt}

\begin{table*}[h]
\caption{Mean (and corresponding standard deviation) Accuracy of the different considered pre-processing methods for different models and datasets. "None" corresponds to the case where the time series is used without normalization. }
\label{tab:results_classification_appendix}
\resizebox{\textwidth}{!}{%
\begin{tabular}{llccccccccc}
\hline
Model                        & Normalization & EthanolConcentration & FaceDetection    & Handwriting      & Heartbeat        & JapaneseVowels   & PEMS-SF          & SelfRegulationSCP1 & SpokenArabicDigits & UWaveGestureLibrary \\ \hline
\multirow{6}{*}{\rotatebox{90}{Transformer}} & Standard (Instance)      & $28.77 \pm 0.90$     & $67.21 \pm 0.47$ & $37.57 \pm 0.53$ & $73.01 \pm 0.46$ & $79.82 \pm 0.34$ & $88.05 \pm 2.68$ & $85.44 \pm 1.86$   & $98.27 \pm 0.26$   & $86.77 \pm 0.39$    \\
                             & Standard (Global)      & $31.05 \pm 0.47$     & $67.45 \pm 0.68$ & $37.18 \pm 0.29$ & $77.24 \pm 0.61$ & $97.84 \pm 0.22$ & $85.36 \pm 1.79$ & $90.10 \pm 0.28$   & $98.42 \pm 0.09$   & $86.77 \pm 0.39$    \\ \cline{2-11} 
                             & None          & $30.93 \pm 0.72$     & $67.91 \pm 0.28$ & $37.57 \pm 0.53$ & $74.80 \pm 0.46$ & $98.11 \pm 0.00$ & $90.56 \pm 0.55$ & $91.81 \pm 0.74$   & $98.59 \pm 0.04$   & $86.77 \pm 0.39$    \\
                             & Minmax        & $29.91 \pm 1.53$     & $50.01 \pm 0.01$ & $27.80 \pm 0.84$ & $75.12 \pm 0.80$ & $97.48 \pm 0.34$ & $87.48 \pm 0.72$ & $91.70 \pm 1.13$   & $98.21 \pm 0.29$   & $83.75 \pm 0.68$    \\
                             & Quantile      & $36.12 \pm 2.54$     & $67.39 \pm 0.57$ & $30.12 \pm 0.88$ & $77.40 \pm 0.83$ & $97.12 \pm 0.46$ & $87.48 \pm 1.66$ & $89.42 \pm 1.28$   & $98.42 \pm 0.17$   & $87.08 \pm 0.15$    \\
                             & Robust        & $32.07 \pm 2.68$     & $68.11 \pm 0.75$ & $37.45 \pm 0.84$ & $75.77 \pm 1.28$ & $97.57 \pm 0.44$ & $82.66 \pm 1.42$ & $91.24 \pm 0.90$   & $98.54 \pm 0.20$   & $86.67 \pm 0.39$    \\ \hline
\multirow{6}{*}{\rotatebox{90}{Autoformer}}  & Standard (Instance)      & $34.98 \pm 0.82$     & $59.11 \pm 0.63$ & $18.47 \pm 0.51$ & $71.87 \pm 0.46$ & $74.14 \pm 0.56$ & $83.62 \pm 0.72$ & $57.22 \pm 1.13$   & $97.04 \pm 0.16$   & $51.35 \pm 1.18$    \\
                             & Standard (Global)      & $34.22 \pm 1.42$     & $60.20 \pm 2.18$ & $17.37 \pm 0.71$ & $72.20 \pm 0.00$ & $96.22 \pm 0.38$ & $81.50 \pm 0.82$ & $56.77 \pm 0.58$   & $97.38 \pm 0.48$   & $51.35 \pm 1.18$    \\ \cline{2-11} 
                             & None          & $37.39 \pm 5.52$     & $59.40 \pm 2.90$ & $18.31 \pm 1.02$ & $72.03 \pm 0.23$ & $95.32 \pm 1.50$ & $79.38 \pm 1.44$ & $57.68 \pm 0.74$   & $98.32 \pm 0.17$   & $51.35 \pm 1.18$    \\
                             & Minmax        & $32.95 \pm 0.78$     & $54.30 \pm 1.10$ & $17.49 \pm 0.44$ & $71.54 \pm 0.92$ & $94.86 \pm 0.66$ & $80.35 \pm 1.63$ & $55.40 \pm 1.81$   & $96.76 \pm 0.71$   & $45.31 \pm 0.77$    \\
                             & Quantile      & $39.54 \pm 1.55$     & $58.39 \pm 0.90$ & $16.86 \pm 0.22$ & $73.33 \pm 1.00$ & $94.50 \pm 0.78$ & $85.16 \pm 0.72$ & $57.11 \pm 0.64$   & $97.71 \pm 0.40$   & $55.73 \pm 0.59$    \\
                             & Robust        & $34.09 \pm 1.18$     & $59.94 \pm 0.99$ & $18.16 \pm 0.53$ & $71.87 \pm 0.46$ & $95.68 \pm 0.96$ & $80.35 \pm 1.70$ & $57.22 \pm 1.32$   & $97.45 \pm 0.42$   & $52.92 \pm 1.41$    \\ \hline
\multirow{6}{*}{\rotatebox{90}{TimesNet}}    & Standard (Instance)      & $29.02 \pm 1.40$     & $66.69 \pm 0.27$ & $33.33 \pm 0.87$ & $73.01 \pm 0.46$ & $76.22 \pm 1.15$ & $86.90 \pm 2.23$ & $85.89 \pm 1.40$   & $99.00 \pm 0.17$   & $88.02 \pm 0.64$    \\
                             & Standard (Global)      & $30.54 \pm 1.25$     & $66.38 \pm 0.23$ & $33.14 \pm 1.35$ & $76.75 \pm 1.00$ & $97.30 \pm 0.00$ & $83.62 \pm 1.19$ & $89.65 \pm 0.98$   & $98.61 \pm 0.39$   & $88.02 \pm 0.64$    \\ \cline{2-11} 
                             & None          & $42.84 \pm 1.59$     & $67.54 \pm 0.28$ & $33.29 \pm 1.00$ & $73.98 \pm 0.61$ & $97.30 \pm 0.66$ & $87.28 \pm 0.82$ & $89.76 \pm 0.48$   & $98.82 \pm 0.17$   & $88.02 \pm 0.64$    \\
                             & Minmax        & $29.28 \pm 1.12$     & $54.75 \pm 6.47$ & $24.16 \pm 0.55$ & $74.96 \pm 1.00$ & $97.12 \pm 0.67$ & $85.55 \pm 1.25$ & $90.67 \pm 0.16$   & $99.11 \pm 0.12$   & $83.12 \pm 1.35$    \\
                             & Quantile      & $36.88 \pm 1.35$     & $67.59 \pm 0.17$ & $31.14 \pm 0.64$ & $77.24 \pm 0.23$ & $97.48 \pm 0.34$ & $88.44 \pm 0.94$ & $89.19 \pm 1.29$   & $98.86 \pm 0.10$   & $87.29 \pm 0.64$    \\
                             & Robust        & $33.84 \pm 0.62$     & $67.10 \pm 0.45$ & $33.45 \pm 1.15$ & $77.24 \pm 0.23$ & $97.21 \pm 0.34$ & $84.59 \pm 2.88$ & $90.44 \pm 1.11$   & $98.27 \pm 0.10$   & $87.19 \pm 0.00$    \\ \hline
\end{tabular}%
}
\end{table*}

\noindent Consequently, and for both computed inequalities, by taking the Markov Inequality, we get the desired bound, specifically:

\begin{itemize}
    \item For Global-based scaling: $$\gamma = \frac{\epsilon\sqrt{d} }{ \sigma (x^{\max} - x^{\min})}  \left(\frac{d}{d-1}\right)^2 (1+C_1) C_2.$$
    \item For Instance-based scaling: $$\gamma =  \left(\sqrt{\sum_{i=1}^{N} \frac{1}{(x_i^{\max} - x_i^{\min})^2}} \right)\frac{\epsilon}{\sigma} \left(\frac{d}{d-1}\right)^2 (1+C_1) C_2,$$
\end{itemize}

\begin{align*} 
    & \text{with:} \hspace{0.5em}  C_1 = \lVert W_O \lVert \sqrt{H} 
    \max_h \Biggl[ \Bigl\lVert W^{V,h} \Bigr\rVert \\
    &\hspace{7em}\left(4\,\frac{n}{\sqrt{d/H}}\,\Bigl\lVert W^{Q,h} \Bigr\rVert\,\Bigl\lVert W^{K,h} \Bigr\rVert + 1\right) \Biggr], \\
    & C_2 = 1 + \Bigl\lVert W_{FFN} \Bigr\rVert.
\end{align*}

\end{proof}

\section{Additional Results}\label{appendix:additional_results}
To further complete the results that were presented in Section \ref{sec:experimental_results_classification} in which we have chosen to visualize the effect of normalization through Radar chart, we provide the results in form of a table as well. In this perspective, Table \ref{tab:results_classification_appendix} reports the mean (and corresponding standard deviation) accuracy of the considered strategies for the different models and datasets.

\begin{table}[h]
\caption{Statistics of the additional classification datasets used in our experiments.}
\centering
\label{tab:dataset_details}
\resizebox{0.75\columnwidth}{!}{%
\begin{tabular}{lcccc}
\hline
Dataset & \#Training Points & \#Test Points & \#Length & \#Classes \\
\hline
EthanolConcentration  & 261 & 263 & 1751 & 4 \\
FaceDetection         & 5890 & 3524 & 62 & 2 \\
Handwriting           & 150 & 850 & 152 & 26 \\
Heartbeat             & 204 & 205 & 405 & 2 \\
JapaneseVowels        & 270 & 370 & 29 & 9 \\
PEMS-SF               & 267 & 173 & 144 & 7 \\
SelfRegulationSCP1    & 268 & 293 & 896 & 2 \\
SpokenArabicDigits    & 6599 & 2199 & 93 & 10 \\
UWaveGestureLibrary   & 2238 & 2241 & 315 & 8 \\
\hline
\end{tabular}%
}
\end{table}

\section{Experimental Details}\label{appendix:experimental_details}

\noindent \textbf{Implementation.} For all the experiments, we consider fully-supervised learning. The different considered model are optimized using the Adam optimizer (learning rate $10^{-3}$) for 100 epochs, with early stopping (with a patience of $10$ epochs). For the classification task, we optimize the binary cross-entropy loss, while for the regression, we consider the mean squared error loss. All the experiments were run on a L4 Nvidia GPU.

\noindent \textbf{Model Architecture.} For the classification task, we set the hiddden representation of all the models to $128$, and we set the number of encoder layers to $3$. We use the Mean-Pooling to do the final representation that is used before the classification head. For the forecasting, we set the sequence length to $96$, and we use $2$ layers of encoder and $7$ layers for the decoder.

\noindent \textbf{Datasets.} For the Classification dataset, we mainly based our experiment on a set of time series dataset from the UAE database. Additional details about these dataset are provided in Table \ref{tab:dataset_details}.

\end{document}